\documentclass{article}
\usepackage{arxiv}
\usepackage[utf8]{inputenc}
\usepackage[T1]{fontenc}
\usepackage[authoryear,round]{natbib}
\setcitestyle{citesep={;},aysep={,},yysep={;}}
\renewcommand{\shorttitle}{When Better Gets Worse}

\usepackage{amsmath,amssymb,amsthm}
\usepackage{booktabs}
\usepackage{graphicx}
\usepackage{microtype}
\usepackage{xcolor}
\usepackage{url}
\usepackage{float}
\usepackage{algorithm}
\usepackage{algorithmic}
\usepackage{flafter}
\usepackage{placeins}
\usepackage[hidelinks]{hyperref}

\graphicspath{{figures/}}

\newcommand{\E}{\mathbb{E}}
\newcommand{\R}{\mathbb{R}}
\newcommand{\M}{\mathcal{M}}

\newcommand{\pii}{\pi}
\newcommand{\pip}{\pi'}
\newtheorem{proposition}{Proposition}
\newcommand{\OperatorShiftSeeds}{30}

\newcommand{\OperatorShiftEffect}{0.2627}
\newcommand{\OperatorShiftEffectCI}{[0.1684,\,0.3763]}

\newcommand{\ClosedLoopRounds}{40}

\newcommand{\ClosedLoopEffect}{1.4883}
\newcommand{\ClosedLoopEffectCI}{[1.0618,\,1.9105]}

\newcommand{\OODGain}{-0.2590}
\newcommand{\OODGainCI}{[-0.3588,\,-0.1707]}

\newcommand{\StrategicEffect}{-0.0240}
\newcommand{\StrategicEffectCI}{[-0.0249,\,-0.0231]}
\newcommand{\StrategicReversalRate}{0.9495}

\newcommand{\RevisionDecisionRoots}{90}
\newcommand{\RevisionDecisionFlips}{51}

\newcommand{\RevisionCriticalQueryRate}{0.97}
\newcommand{\RevisionCriticalQueryRateCI}{[0.92, 1.00]}
\newcommand{\RevisionCriticalQueryHitGain}{0.113}
\newcommand{\RevisionCriticalQueryHitGainCI}{[0.055, 0.180]}
\newcommand{\RevisionCriticalQueryMissGain}{-0.195}
\newcommand{\RevisionCriticalQueryMissGainCI}{[-0.584, 0.000]}
\newcommand{\RevisionCriticalNoiseMargin}{13.99}
\newcommand{\RevisionCriticalNoiseMarginCI}{[7.97, 22.90]}

\newcommand{\RevisionLeducCriticalQueryRate}{1.00}
\newcommand{\RevisionKuhnCriticalQueryRate}{1.00}
\newcommand{\RevisionMeltingCriticalQueryRate}{0.90}

\newcommand{\HighwayRedesignBudgetFourContrast}{$0.0380$}
\newcommand{\HighwayRedesignBudgetFourCI}{$[0.0300,0.0474]$}

\title{When Better Gets Worse: Improvement Fidelity\\
for Self-Improving Agents in Adaptive Worlds}
\author{
\makebox[0.43\textwidth]{Ke Wang}\\[0.2em]
\normalfont University of Cambridge\\
\normalfont Georgia Institute of Technology\\
\normalfont\small\href{mailto:colinwang@gatech.edu}{colinwang@gatech.edu}
\And
\makebox[0.43\textwidth]{Zijie Zhao\thanks{%
Corresponding author: Zijie Zhao (\href{mailto:zijiezha@mit.edu}{zijiezha@mit.edu}).}}\\[0.2em]
\normalfont Massachusetts Institute of Technology\\
\normalfont\small\href{mailto:zijiezha@mit.edu}{zijiezha@mit.edu}
\AND
\makebox[0.43\textwidth]{Zhiyi Yuan}\\[0.2em]
\normalfont University of Hong Kong\\
\normalfont\small\href{mailto:u3629247@connect.hku.hk}{u3629247@connect.hku.hk}
\And
\makebox[0.43\textwidth]{Changlun Li}\\[0.2em]
\normalfont The Hong Kong University of\\
\normalfont Science and Technology (Guangzhou)\\
\normalfont\small\href{mailto:cli942@connect.hkust-gz.edu.cn}{cli942@connect.hkust-gz.edu.cn}
}

\hypersetup{
  pdftitle={When Better Gets Worse: Improvement Fidelity for Self-Improving Agents in Adaptive Worlds},
  pdfauthor={Ke Wang; Zijie Zhao; Zhiyi Yuan; Changlun Li}
}
\date{}

\begin{document}
\maketitle
\setcounter{footnote}{0}

\begin{abstract}
Self-improving agents increasingly rely on proxy verifiers to choose policy
updates, yet deployment can change the world in which those updates are
evaluated. An update that looks better to the verifier can therefore become
worse after deployment even when the verifier ranks policies well overall.
We formalize this gap as \emph{Improvement Fidelity}, which asks whether proxy
improvements preserve the sign and ordering of deployment improvements over
the updates an improvement process actually proposes. We show that global
policy accuracy need not guarantee update fidelity: operator shift and
deployment response can create update-level errors, while candidate margins
determine whether those errors change the replacement decision. We introduce
\textbf{PIVOT-KG}, a paired, decision-aware validator that allocates scarce
high-fidelity evaluation according to the expected reduction in selection
regret per unit cost. Across 90 held-out roots in Leduc, Kuhn, and Melting Pot,
proxy and deployment optimal sets are disjoint in 51 cases. In an
eight-candidate HighwayEnv stress test, PIVOT-KG reduces mean
improvement-selection regret from 0.0435 under the exact Uniform validation
rule to 0.0055 at the primary budget. Together, these results show why reliable
self-improvement should evaluate proposed improvements in the worlds they
induce, while providing a practical rule for allocating scarce deployment
evidence when it can affect the replacement decision.\footnote{Code and
reproducibility materials:
{\urlstyle{same}\url{https://github.com/computational-decision-lab/pivot}}.}
\end{abstract}

\section{Introduction}

Self-improving agents repeatedly propose candidate policies, score them with a
verifier, and deploy a replacement. Yet an update that looks better to the
verifier can become worse after deployment when the deployed policy changes the
world in which it is evaluated. A lane change alters nearby trajectories, and
an action in a multi-agent game changes how other agents respond. A verifier
can therefore be accurate over policies in aggregate and still choose the wrong
local replacement. Figure~\ref{fig:reversal} illustrates this failure mode.

We call this transition-level requirement \emph{Improvement Fidelity}: whether
an improvement under the verifier remains an improvement after deployment in
the world induced by the update. Because deployment evaluation is often too
costly for every candidate, we introduce \textbf{PIVOT-KG}, the knowledge-gradient
instantiation of PIVOT (Paired Interventional Validation of Optimization
Transitions), a paired, decision-aware validator that directs a limited
high-fidelity (HF) budget to uncertainty that can change the replacement
decision.

Prior work separately studies deployment-induced response, policy-level
evaluation or safe improvement, and verifier reliability. We instead study an
operator-relative criterion for whether a proposed replacement remains an
improvement in the world it induces, together with a decision-aware rule for
allocating scarce deployment evidence for that decision.

\begin{figure}[H]
  \centering
  \includegraphics[width=\linewidth]{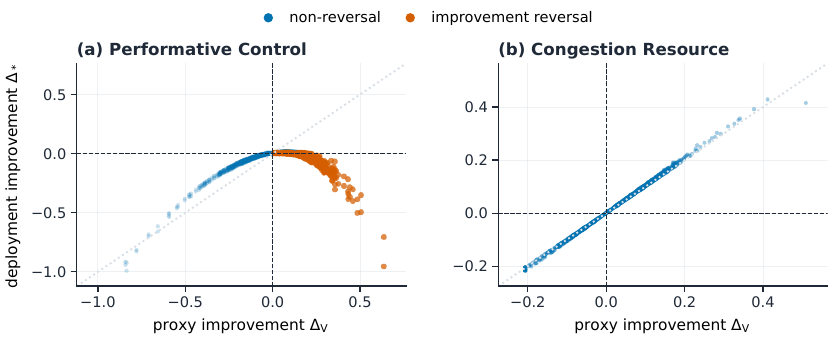}
  \caption{Proxy improvements can reverse after deployment. Each point is a
  proposed update; lower-right points are proxy-positive but
  deployment-negative.}
  \label{fig:reversal}
\end{figure}

Our contributions are:

\noindent\textbf{1. Improvement Fidelity.}
We formulate reliable self-improvement as an operator-relative, transition-level
evaluation problem and introduce Improvement Fidelity, showing why strong global
policy evaluation need not imply reliable evaluation of the updates an agent
actually proposes.

\smallskip
\noindent\textbf{2. Decision-aware validation.}
We introduce PIVOT-KG, which allocates a limited HF evaluation budget by its
expected reduction in update-selection regret rather than global prediction
error.

\smallskip
\noindent\textbf{3. Evidence across adaptive worlds.}
Across controlled response worlds, multi-agent environments, and reactive
driving simulations, we test the failure mechanisms predicted by the
theory and identify when targeted HF validation reduces selection regret.

\smallskip

Together, these contributions shift the evaluation of self-improvement from
whether policies are evaluated accurately in a proxy world to whether proposed
improvements remain improvements after deployment, while providing a principled
basis for allocating deployment evidence when it can affect the replacement decision.

\section{Related Work}

\paragraph{Performative and strategic response.}
Performative prediction and reinforcement learning study settings in which
deployment changes the data distribution, dynamics, rewards, or behavior of
other agents
~\citep{perdomo2020performative,mandal2023performative,rank2024performative,
mandal2025linear,gois2025performative,multiagentperformative2022}.
These works study how deployment-induced response affects learning, stability,
and optimality. We instead ask a local evaluation question: whether a
particular proposed replacement remains an improvement under the response
world it induces.

\paragraph{Policy evaluation and safe improvement.}
Off-policy evaluation and safe policy improvement estimate or certify the
performance of a candidate relative to a baseline
~\citep{jiang2016doubly,thomas2015hcpi,laroche2019spibb}, while
work on learned simulators studies policy evaluation, policy-value gaps,
or ranking preservation~\citep{worldgym2025,policyaware2026}.
Our focus is the proposed transition itself: a proxy can preserve policy-level
accuracy or rankings yet still induce the wrong local replacement decision
once deployment changes the evaluation world.

\paragraph{Self-improvement and verification.}
Recent work has made iterative policy or agent improvement operational
~\citep{evopolicygym2026,dgm2025} and has documented regressions or failures of
verification during self-improvement
~\citep{wu2024progress,selfauthored2026}. Related work on reward
overoptimization shows that optimizing an imperfect proxy can itself degrade
true performance~\citep{gao2023reward,laidlaw2025correlated}.
Our failure mode is complementary: the verifier need not be self-authored,
changing, or directly exploited. Even a fixed external verifier can select the
wrong update when deployment changes the world in which that update is
evaluated.

\paragraph{Adaptive selection and paired evaluation.}
Knowledge-gradient, ranking-and-selection, and best-arm methods allocate costly
observations to decision-relevant alternatives under limited budgets
~\citep{chick2001crn,frazier2009kg,kalyanakrishnan2012pac,xie2016pairwise}.
PIVOT-KG applies this decision-aware allocation principle to
proxy--deployment validation, modeling the correction in improvement and
valuing HF queries by their effect on the replacement decision.

The gap we study is therefore operator-relative and update-level: whether the
specific replacements an improvement process actually proposes preserve the
sign and ordering of improvement under the worlds induced by their deployment,
and how scarce deployment evidence should be allocated when that uncertainty can
change the replacement decision.

\section{Improvement Fidelity: Problem Formulation}

\paragraph{Proxy and deployment improvement.}
Let $\M[\pi]$ denote the deployment world induced by policy $\pi$, including
the environment response to that policy. Let $V$ denote the cheap verifier,
with value $J_V(\pi)$, and let $J(\pi;\M[\pi])$ denote the deployment value
of $\pi$ in its induced world. For a
proposed replacement $\tau=(\pii,\pip)$, define
\begin{equation}
  \Delta_V=J_V(\pip)-J_V(\pii),
  \qquad
  \Delta_*=J(\pip;\M[\pip])-J(\pii;\M[\pii]).
  \label{eq:improvement}
\end{equation}
Thus $\Delta_V$ is the verifier-estimated improvement and $\Delta_*$ is the
deployment improvement.

\paragraph{Operator-relative Improvement Fidelity.}
Let an improvement operator $\mathcal A$ induce a transition law
$Q_{\mathcal A}$ over proposed replacements. For a transition-level loss $L$,
we quantify the corresponding Improvement Fidelity error by
\begin{equation}
  \mathcal E_{\mathrm{IF}}(V,\mathcal A;L)
  =
  \E_{\tau\sim Q_{\mathcal A}}
  [L(\Delta_V,\Delta_*)].
  \label{eq:if}
\end{equation}
The relevant population is the updates the improvement process actually
proposes rather than the policy space in general; smaller values indicate
greater fidelity.

\paragraph{Fidelity and decision metrics.}
Improvement Differential Error (IDE) is the absolute-error instance of
Eq.~\eqref{eq:if}. We define
$\mathrm{IDE}=\E_{Q_{\mathcal A}}[|\Delta_V-\Delta_*|]$ for error magnitude,
$\mathrm{ISC}=\Pr_{Q_{\mathcal A}}(\operatorname{sgn}\Delta_V=
\operatorname{sgn}\Delta_*\mid\Delta_V\Delta_*\neq0)$ for sign agreement
after removing ties, and
$\mathrm{IRR}=\Pr_{Q_{\mathcal A}}(\Delta_*<0\mid\Delta_V>0)$ for the
proxy-positive reversal rate. Empirical results report the corresponding
sample estimates. Let $C=\{1,\ldots,K\}$ index candidate updates and
$C^+=\{0\}\cup C$ include the no-update action, with
$\Delta_{V,0}=\Delta_{*,0}=0$. For a selected action $\hat j\in C^+$, define
improvement-selection regret (ISR) as
\begin{equation}
  \mathrm{ISR}(\hat j)
  =
  \max_{j\in C^+}\Delta_{*,j}-\Delta_{*,\hat j}.
  \label{eq:isr}
\end{equation}
IDE, ISC, and IRR diagnose transition-level fidelity; ISR measures its
consequence for the replacement decision.

\paragraph{Deployment response levels.}
To separate sources of deployment response, we consider the following levels.
In the strategic case, $\pi$ denotes the focal agent $i$'s policy and
$R_{-i}(\pi)$ the response of the other agents:
\begin{align*}
  \Delta_{\mathrm{direct}}
  &=J(\pip;\M[\pii])-J(\pii;\M[\pii]),\\
  \Delta_{\mathrm{actor}}
  &=J(\pip;\M[\pip])-J(\pii;\M[\pii]),\\
  \Delta_{\mathrm{strategic}}
  &=J_i(\pip,R_{-i}(\pip))-J_i(\pii,R_{-i}(\pii)).
\end{align*}
The direct level holds the incumbent world fixed, the actor level allows the
environment to respond to the candidate, and the strategic level additionally
allows other agents to adapt through a response map $R_{-i}$. Here $J_i$
denotes realized value after both environment and other-agent response; best
response is one special case. In settings without other-agent adaptation,
$\Delta_{\mathrm{actor}}=\Delta_*$.

\section{Properties of Improvement Fidelity}

The results below establish when policy-level evaluation controls update-level
fidelity, how that connection can fail, and when the resulting error changes a
replacement decision.

\paragraph{From policy fidelity to update fidelity.}
\begin{proposition}[Uniform value fidelity is sufficient]
If $\sup_\pi|J_V(\pi)-J(\pi;\M[\pi])|\leq\varepsilon$, then
$|\Delta_V-\Delta_*|\leq2\varepsilon$. The sign is preserved whenever
$|\Delta_*|>2\varepsilon$.
\end{proposition}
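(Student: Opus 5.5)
The plan is to write the update-level error as a difference of two policy-level errors and apply the triangle inequality. For each policy $\pi$, define the pointwise value error
\[
e(\pi)=J_V(\pi)-J(\pi;\M[\pi]).
\]
By hypothesis, $|e(\pi)|\le\varepsilon$ for every $\pi$. Substituting the definitions in Eq.~\eqref{eq:improvement} and regrouping the four terms gives the identity
\[
\Delta_V-\Delta_*=\bigl(J_V(\pip)-J(\pip;\M[\pip])\bigr)-\bigl(J_V(\pii)-J(\pii;\M[\pii])\bigr)=e(\pip)-e(\pii).
\]
This step needs the hypothesis to compare $J_V(\pi)$ with the deployment value in the policy's own induced world $\M[\pi]$. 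The deployment improvement $\Delta_*$ evaluates $\pip$ in $\M[\pip]$ and $\pii$ in $\M[\pii]$, so each term pairs with the correct error. The triangle inequality then yields
\[
|\Delta_V-\Delta_*|\le|e(\pip)|+|e(\pii)|\le2\varepsilon.
\]
The bound is uniform over transitions, so it also controls IDE under any operator $\mathcal A$.

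For sign preservation, assume $|\Delta_*|>2\varepsilon$ and treat the case $\Delta_*>2\varepsilon$. By the first part,
\[
\Delta_V\ge\Delta_*-|\Delta_V-\Delta_*|\ge\Delta_*-2\varepsilon>0,
\]
so $\operatorname{sgn}\Delta_V=\operatorname{sgn}\Delta_*$. The case $\Delta_*<-2\varepsilon$ is symmetric, using $\Delta_V\le\Delta_*+2\varepsilon<0$. Because the inequality on $|\Delta_*|$ is strict, $\Delta_V$ cannot be $0$, so no tie convention is needed.

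None of these steps is a real obstacle. The only point requiring care is the bookkeeping in the identity above, namely checking that each policy is evaluated in its own induced world. If the hypothesis were stated only against a fixed world such as $\M[\pii]$, the argument would control $\Delta_{\mathrm{direct}}$ rather than $\Delta_*$.
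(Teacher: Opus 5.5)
Your proof is correct and matches the paper's: the paper also writes $\Delta_V-\Delta_*$ as the difference of the two policy-level errors $[J_V(\pip)-J(\pip;\M[\pip])]-[J_V(\pii)-J(\pii;\M[\pii])]$ and applies the triangle inequality. The sign statement then follows immediately, and your explicit case analysis simply spells out the step the paper leaves implicit.
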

\begin{proof}
$\Delta_V-\Delta_*=
[J_V(\pip)-J(\pip;\M[\pip])]-[J_V(\pii)-J(\pii;\M[\pii])]$; the triangle
inequality gives the bound and the sign statement.
\end{proof}

\begin{proposition}[Global policy accuracy need not imply update fidelity]
For every $\varepsilon>0$, there exist a finite policy family, a verifier,
a policy distribution, and an improvement operator with policy mean absolute error (MAE) and
Spearman rank error below $\varepsilon$, but $\mathrm{IRR}=1$ and
$\mathrm{ISC}=0$ under the operator-induced transition law.
\end{proposition}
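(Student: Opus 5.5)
The plan is an explicit construction. The policy-level metrics (MAE and Spearman rank error) are averages over a policy distribution, while IRR and ISC are taken under the operator-induced transition law $Q_{\mathcal A}$. So we place almost all policy mass on policies where the verifier is exact, and let $\mathcal A$ propose only transitions between two policies that carry negligible mass and whose deployment values are very close. Concretely, take $N$ policies $\pi_1,\dots,\pi_N$ with deployment values $J(\pi_k;\M[\pi_k])=k$ for $k\le N-2$. For the remaining two policies $a=\pi_{N-1}$ and $b=\pi_N$, set $J(a;\M[a])=N-1+\delta$ and $J(b;\M[b])=N-1$, with small $\delta>0$. Define the verifier by $J_V(\pi_k)=k$ for $k\le N-2$, $J_V(a)=N-1$, and $J_V(b)=N-1+\delta$. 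The verifier is then exact on the first $N-2$ policies and swaps the order of $a$ and $b$ only by the amount $\delta$.

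Let the improvement operator propose the single transition $\tau=(a,b)$ with probability one, so $Q_{\mathcal A}=\delta_{(a,b)}$. We have $\Delta_V=\delta>0$ and $\Delta_*=-\delta<0$. Hence $\Pr(\Delta_*<0\mid\Delta_V>0)=1$, giving $\mathrm{IRR}=1$. Since $\Delta_V\Delta_*\neq0$ and the signs disagree, $\mathrm{ISC}=0$. Note that the example concerns actor-level response: $a$ and $b$ are evaluated in their own induced worlds, so no contradiction with Proposition~1 arises. The uniform error here is $\delta$, and $|\Delta_*|=\delta<2\delta$, which is consistent with Proposition~1.

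Next we bound the policy-level metrics under a policy distribution $\mu$. Take $\mu$ uniform on all $N$ policies, or more simply put mass at most $\eta$ on $\{a,b\}$. The MAE is $\E_\mu|J_V-J|=\delta\,\mu(\{a,b\})\le\delta$, which is below $\varepsilon$ once $\delta<\varepsilon$.

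For rank error, define it as the normalized Spearman discrepancy $1-\rho$ between the verifier ranking and the deployment ranking over the family (or over $\mu$-weighted samples). The two rankings differ by one adjacent transposition among $N$ items. Using $\rho=1-6\sum d_i^2/(N(N^2-1))$ with $\sum d_i^2=2$, we get $1-\rho=12/(N(N^2-1))$, which is below $\varepsilon$ for $N$ large. If the paper's rank error is instead a fraction of discordant pairs, it equals $1/\binom N2$, which is also small.

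The main obstacle is definitional: we must pin down what "Spearman rank error" and the policy distribution mean. The plan is to state one reasonable definition (one minus Spearman's $\rho$ computed on the support of $\mu$) and note that the construction is robust to alternatives, because a single adjacent swap makes any standard rank-discordance measure $O(1/N)$ or smaller. Finally, choose $\delta<\varepsilon$ and $N$ large enough that the rank error is also below $\varepsilon$.
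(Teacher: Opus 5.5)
Your construction is correct and is essentially the paper's: a single adjacent transposition in a linearly ordered policy family, with the operator always proposing the swapped pair from the higher-value to the lower-value policy. This gives $\mathrm{IRR}=1$, $\mathrm{ISC}=0$, and $1-\rho_S=12/[N(N^2-1)]$; the only cosmetic difference is that you decouple the swap gap $\delta$ from the value spacing, whereas the paper uses values $k/(n-1)$ under the uniform distribution so that both $\mathrm{MAE}=2/[n(n-1)]$ and the rank error vanish with $n$ alone.
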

\begin{proof}
Let $J(\pi_k)=k/(n-1)$ for $k=0,\ldots,n-1$ under the uniform policy
distribution, and let $J_V$ swap the values of adjacent policies
$\pi_k,\pi_{k+1}$ for any fixed $k\in\{0,\ldots,n-2\}$. Then
$\mathrm{MAE}=2/[n(n-1)]$ and $1-\rho_S=12/[n(n^2-1)]$, both tending to
zero. If the operator always proposes $(\pi_{k+1},\pi_k)$, then
$\Delta_V>0$ while $\Delta_*<0$, giving $\mathrm{IRR}=1$ and
$\mathrm{ISC}=0$.
\end{proof}

\paragraph{Operator shift and deployment response.}
\begin{proposition}[Operator shift bound]\label{prop:operator-shift}
Let $P$ be a calibration transition law and $Q_{\mathcal A}\ll P$ with
$\chi^2(Q_{\mathcal A}\Vert P)<\infty$. For
$w=dQ_{\mathcal A}/dP$ and $\ell(\tau)=L(\Delta_V,\Delta_*)$ with finite
variance under $P$,
\begin{equation}
  \left|
    \E_{Q_{\mathcal A}}[\ell]-\E_P[\ell]
  \right|
  \leq
  \sqrt{
    \operatorname{Var}_P(\ell)\,
    \chi^2(Q_{\mathcal A}\Vert P)
  }.
  \label{eq:operator-shift}
\end{equation}
\end{proposition}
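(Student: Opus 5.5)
The approach is a change of measure followed by a centered Cauchy--Schwarz inequality. Since $Q_{\mathcal A}\ll P$, the Radon--Nikodym derivative $w=dQ_{\mathcal A}/dP$ exists, is nonnegative, and satisfies $\E_P[w]=1$. We first rewrite the target expectation as $\E_{Q_{\mathcal A}}[\ell]=\E_P[w\ell]$. This requires $w\ell\in L^1(P)$. We obtain it from Cauchy--Schwarz, because $\ell\in L^2(P)$ by the finite-variance assumption and $w\in L^2(P)$ by finiteness of the divergence, since $\chi^2(Q_{\mathcal A}\Vert P)=\E_P[(w-1)^2]=\E_P[w^2]-1$.

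The key step is to center both factors. Because $\E_P[w-1]=0$, we may subtract any constant from $\ell$ without changing $\E_P[(w-1)\ell]$. This gives
\begin{equation*}
  \E_{Q_{\mathcal A}}[\ell]-\E_P[\ell]
  =\E_P[(w-1)\ell]
  =\E_P\bigl[(w-1)(\ell-\E_P[\ell])\bigr].
\end{equation*}
Applying Cauchy--Schwarz under $P$ then yields
\begin{equation*}
  \bigl|\E_P[(w-1)(\ell-\E_P\ell)]\bigr|
  \le
  \sqrt{\E_P[(w-1)^2]}\,\sqrt{\E_P[(\ell-\E_P\ell)^2]}
  =
  \sqrt{\chi^2(Q_{\mathcal A}\Vert P)\operatorname{Var}_P(\ell)},
\end{equation*}
which is exactly Eq.~\eqref{eq:operator-shift}.

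There is no substantive obstacle. The only point needing care is the integrability bookkeeping: we must confirm that $\E_{Q_{\mathcal A}}[\ell]$ is finite and equals $\E_P[w\ell]$, so that the subtraction is legitimate and not of the form $\infty-\infty$. This follows from the $L^2$ memberships noted above, together with $\E_P[w]=1$, which justifies inserting the constant $\E_P[\ell]$.

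We will also note that the result holds for any transition-level loss $L$, with no structure on $\Delta_V$ or $\Delta_*$ used. In particular, it applies directly to the absolute-error instance $\mathrm{IDE}$.
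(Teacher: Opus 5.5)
Your proposal is correct and matches the paper's proof: center using $\E_P[w]=1$ to write the difference as $\E_P[(w-1)(\ell-\E_P[\ell])]$, then apply Cauchy--Schwarz with $\E_P[(w-1)^2]=\chi^2(Q_{\mathcal A}\Vert P)$. Your added check that $w,\ell\in L^2(P)$, so that $\E_{Q_{\mathcal A}}[\ell]=\E_P[w\ell]$ is finite, is a sound detail that the paper leaves implicit.
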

\begin{proof}
Since $\E_P[w]=1$,
$\E_{Q_{\mathcal A}}[\ell]-\E_P[\ell]
=\E_P[(w-1)(\ell-\E_P[\ell])]$.
Cauchy--Schwarz and
$\E_P[(w-1)^2]=\chi^2(Q_{\mathcal A}\Vert P)$ give the result.
\end{proof}

Deployment response is a distinct source of error:
$\Delta_{\mathrm{actor}}-\Delta_{\mathrm{direct}}
=J(\pip;\M[\pip])-J(\pip;\M[\pii])$.
The actor--direct gap therefore isolates the change in candidate value caused
by the world it induces; the strategic level additionally captures other-agent
response through $R_{-i}$.

\paragraph{When fidelity errors change decisions.}
\begin{proposition}[Decision preservation]\label{prop:decision-preservation}
Let $\varepsilon_C=\max_{j\in C^+}|\Delta_{V,j}-\Delta_{*,j}|$,
$j^*\in\arg\max_{j\in C^+}\Delta_{*,j}$, and
$\hat j_V\in\arg\max_{j\in C^+}\Delta_{V,j}$. If $j^*$ is unique, let
$m=\Delta_{*,j^*}-\max_{k\neq j^*}\Delta_{*,k}$. Then
\begin{equation}
  \mathrm{ISR}(\hat j_V)\leq2\varepsilon_C,
  \qquad
  m>2\varepsilon_C\ \Longrightarrow\ \hat j_V=j^*.
  \label{eq:decision-preservation}
\end{equation}
\end{proposition}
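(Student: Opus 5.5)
The plan is a standard argmax-perturbation argument; the only inputs are the definition of $\varepsilon_C$ and the fact that $\hat j_V$ maximizes $\Delta_{V,\cdot}$ over $C^+$. For the regret bound, I chain three inequalities. First, by definition of $\varepsilon_C$, $\Delta_{*,j^*}\le\Delta_{V,j^*}+\varepsilon_C$. Second, by optimality of $\hat j_V$ for the verifier, $\Delta_{V,j^*}\le\Delta_{V,\hat j_V}$. Third, applying the definition of $\varepsilon_C$ again at $\hat j_V$, $\Delta_{V,\hat j_V}\le\Delta_{*,\hat j_V}+\varepsilon_C$. Combining these gives
\[
\mathrm{ISR}(\hat j_V)=\Delta_{*,j^*}-\Delta_{*,\hat j_V}\le 2\varepsilon_C .
\]
This is the same two-sided $\varepsilon$ accounting as in the uniform-fidelity proposition, applied coordinatewise over $C^+$. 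Because $0\in C^+$ has zero error, no-update is covered automatically.

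For the margin statement, I argue by contradiction. Suppose $m>2\varepsilon_C$ but $\hat j_V\neq j^*$. The definition of $m$ gives
\[
\Delta_{*,\hat j_V}\le\max_{k\neq j^*}\Delta_{*,k}=\Delta_{*,j^*}-m,
\]
so $\mathrm{ISR}(\hat j_V)\ge m>2\varepsilon_C$, which contradicts the first part. I can also say this directly: for every $k\neq j^*$,
\[
\Delta_{V,j^*}\ge\Delta_{*,j^*}-\varepsilon_C>\Delta_{*,k}+\varepsilon_C\ge\Delta_{V,k},
\]
so $j^*$ is the strict, and hence unique, verifier maximizer. This direct version also handles ties in $\arg\max\Delta_V$: every element of the verifier argmax must equal $j^*$.

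There is no real obstacle here. The only care needed is bookkeeping. The first bound holds for any choice of $j^*$ and $\hat j_V$ from their argmax sets. Uniqueness of $j^*$ is used only so that $m$ is well-defined and positive.
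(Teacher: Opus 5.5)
Your proof is correct and follows the paper's argument. The regret bound comes from the same three-inequality chain $\Delta_{*,j^*}\le\Delta_{V,j^*}+\varepsilon_C\le\Delta_{V,\hat j_V}+\varepsilon_C\le\Delta_{*,\hat j_V}+2\varepsilon_C$, and your direct strict-inequality argument for $m>2\varepsilon_C$ is a spelled-out version of the paper's one-line remark that ``the same error bound preserves the top-1 ordering.''
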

\begin{proof}
Optimality of $\hat j_V$ gives
$\Delta_{*,j^*}\leq\Delta_{V,j^*}+\varepsilon_C
\leq\Delta_{V,\hat j_V}+\varepsilon_C
\leq\Delta_{*,\hat j_V}+2\varepsilon_C$.
The same error bound preserves the top-1 ordering when $m>2\varepsilon_C$.
\end{proof}

With matched randomness, positive incumbent--candidate covariance reduces the
variance of their paired difference relative to independent evaluation with
the same marginal variances. Uniform policy fidelity is sufficient to control
update error, but global average accuracy need not protect operator-induced
updates. Operator shift and deployment response create distinct sources of
update-level error, and these errors change replacement decisions only
relative to candidate margins. This motivates directing scarce HF evidence to
uncertainty that can change the selected update.

\section{PIVOT-KG: Decision-Aware Validation}

PIVOT is a paired validation framework for candidate transitions; PIVOT-KG
is its decision-aware instantiation using the knowledge-gradient acquisition
rule below. We therefore allocate expensive high-fidelity (HF) evaluation to
uncertainty that can change the replacement decision. Given an incumbent and
$K$ candidate updates, the cheap verifier provides proxy improvements
$\{\Delta_{V,j}\}_{j=1}^K$. PIVOT-KG models the proxy-to-deployment correction
$G_j=\Delta_{*,j}-\Delta_{V,j}$.

For tractable sequential updating, we use a calibrated working Gaussian correction
model over candidate-wise vectors $G,\Delta_V,\Delta_*\in\R^K$:
\[
  G\sim\mathcal N(\mu,\Sigma_G),
  \qquad
  \Delta_*\mid\Delta_V
  \sim\mathcal N(\Delta_V+\mu,\Sigma_G).
\]
The prior parameters and observation variances are estimated from a disjoint
calibration set of paired proxy/HF transitions, so each held-out decision root
starts from this calibrated belief without using its HF outcomes.

A query of candidate $j$ produces a prescribed paired HF evaluation
$Y_j=\Delta_{*,j}+\eta_j$ with $\eta_j\sim\mathcal N(0,s_j^2)$, where $s_j^2$
is estimated and regularized from calibration data. Gaussian conditioning
uses a working likelihood with conditionally independent query noise.
The incumbent and candidate are evaluated
under matched initial state, exogenous randomness, and opponent initialization.
Each query updates the joint posterior by Gaussian conditioning; non-diagonal
$\Sigma_G$ lets evidence update correlated unqueried candidates. In the
fixed-budget procedure, each candidate is queried at most once.

Given queried data $D$, let $a(D)$ denote the posterior replacement decision
and $\mathcal R(D)$ its Bayes selection regret. Using $C^+$ defined above,
\[
\begin{aligned}
  a(D)
  &=\arg\max_{j\in C^+}\E[\Delta_{*,j}\mid D],\\
  \mathcal R(D)
  &=\E\!\left[
      \max_{j\in C^+}\Delta_{*,j}
      -\Delta_{*,a(D)}
      \,\middle|\,D
    \right].
\end{aligned}
\]
For a hypothetical HF observation $Y_j$, define its expected value of sample
information (EVSI) and cost-normalized acquisition score as
\begin{equation}
\begin{aligned}
  \mathrm{EVSI}_j
  &=\mathcal R(D)
    -\E_{Y_j\mid D}\!\left[\mathcal R\bigl(D\cup\{(j,Y_j)\}\bigr)\right],\\
  A_j&=\frac{\mathrm{EVSI}_j}{c_j}.
\end{aligned}
  \label{eq:acquisition}
\end{equation}
PIVOT-KG therefore queries the feasible candidate with the largest expected
reduction in selection regret per unit HF cost. Implementations use Monte Carlo
or analytic Gaussian integration for these posterior expectations.

\begin{algorithm}[H]
\caption{PIVOT-KG under a fixed high-fidelity budget}
\label{alg:pivot-kg}
\begin{algorithmic}[1]
\REQUIRE Incumbent $\pi$; candidates $\{\pi'_j\}_{j=1}^K$;
calibrated model $(\mu,\Sigma_G,\{s_j^2\}_{j=1}^K)$;
query costs $\{c_j\}_{j=1}^K$; HF budget $B$
\ENSURE Selected candidate or incumbent
\STATE Evaluate candidates to obtain $\{\Delta_{V,j}\}_{j=1}^K$
\STATE Initialize the calibrated posterior,
$D\leftarrow\emptyset$, $S\leftarrow\emptyset$, and $b\leftarrow B$
\WHILE{there exists $j\in C\setminus S$ with $c_j\leq b$}
  \STATE Compute $A_j$ for each feasible $j\in C\setminus S$
  \STATE
  $j_q\leftarrow
  \arg\max_{j\in C\setminus S:\,c_j\leq b} A_j$
  \STATE Obtain paired HF observation $Y_{j_q}$ under matched randomness
  \STATE $D\leftarrow D\cup\{(j_q,Y_{j_q})\}$ and condition the posterior
  \STATE $S\leftarrow S\cup\{j_q\}$ and $b\leftarrow b-c_{j_q}$
\ENDWHILE
\RETURN $a(D)$
\end{algorithmic}
\end{algorithm}

The fixed-budget rule is our primary procedure; posterior-based early stopping
is a secondary variant.

\section{Experiments}

The experiments follow the progression from update-level fidelity failure to
its decision consequences and budgeted validation. We first isolate operator
shift and deployment response, then test when the resulting errors alter the
replacement decision, and finally evaluate whether PIVOT-KG can reduce
selection regret with limited high-fidelity (HF) evidence. We conclude with a
reactive driving stress test.

\subsection{Experimental Protocol}
\label{sec:exp-protocol}

All experiments are conducted entirely in simulation.
Controlled response studies use Performative Control and Congestion Resource
to isolate operator shift and deployment response; the pooled operator-shift
contrast additionally includes a frozen MPE2~\citep{lowe2017maddpg} condition
described in Appendix~\ref{app:fidelity-response}. Multi-agent evaluations use Leduc and
Kuhn poker through OpenSpiel~\citep{lanctot2019openspiel} and the Melting Pot
suite~\citep{leibo2021meltingpot}. Controlled studies use paired
common-random-number rollouts with 30 independent seeds, and the held-out
multi-agent study contains 30 roots in each environment. Each held-out decision
root has a fixed panel of candidate policy updates generated by its improvement
operator; selection also permits the incumbent no-update option. Held-out comparisons
use paired root-bootstrap confidence intervals (CIs) conditional on the fitted calibration
model; the controlled-study aggregation is specified in
Appendix~\ref{app:fidelity-response}. An HF query is one prescribed paired
candidate evaluation under matched randomness. Primary comparisons use
PIVOT-KG and Uniform; controlled batch diagnostics additionally include
Proxy Only, value-of-information (VOI) and lower and upper confidence bound (LUCB) heuristics, and an all-HF reference. HighwayEnv~\citep{highwayenv2018} is the
primary reactive driving stress test. A second Leduc cohort and a secondary
MetaDrive stress test are reported in the appendix. Bootstrap intervals
resample the corresponding inferential unit; further reproducibility details
are provided in the supplement.

\subsection{Improvement Fidelity under Operator Shift and Deployment Response}
\label{sec:exp-fidelity}

We first test the two mechanisms identified by the theory. Operator shift
changes which proposed transitions matter, whereas deployment response changes
the world in which a candidate is evaluated. In the operator-shift study, the
pooled high-minus-low contrast in operator-relative IDE is
\OperatorShiftEffect{} \OperatorShiftEffectCI{} (descriptive cell-bootstrap
interval, \OperatorShiftSeeds{} seed values; Appendix~\ref{app:fidelity-response}).
Local update error and reversals increase with the designed shift intensity,
while global policy rank is a fixed calibration reference
(Figure~\ref{fig:operator-shift}). Thus global policy accuracy can mask
failure on the updates an improvement operator actually proposes.

\begin{figure}[!b]
  \centering
  \includegraphics[width=\linewidth]{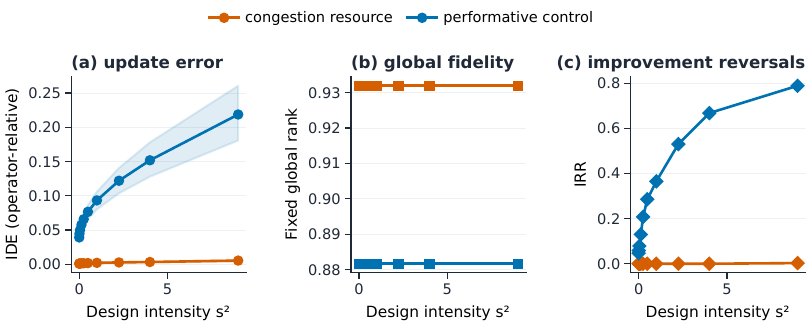}
  \caption{Operator shift exposes update-level failure. The horizontal axis
  is the design intensity $s^2$, not a measured $\chi^2$ divergence. Global
  Spearman rank is a fixed calibration reference.}
  \label{fig:operator-shift}
\end{figure}

Deployment response provides a distinct source of discrepancy. In the
strategic fixture, the strategic-minus-actor effect is
\StrategicEffect{} \StrategicEffectCI{}, with family-balanced strategic
reversal rate \StrategicReversalRate{}. A second Leduc confirmation cohort shows the
same mechanism along the response horizon: the proxy-positive reversal rate
rises from 2.5\% under one-step response to 51.7\% under eight-step adaptation
(Appendix~\ref{app:fidelity-response}). The appendix also compares
response-disabled proxy, actor, and strategic evaluations across opponents. Together,
these results identify two distinct sources of update-level failure: operator
shift can concentrate error on the transitions proposed by the improvement
process, while deployment response can sharply alter the value and sign of a
candidate after deployment.

\subsection{Decision Relevance and High-Fidelity Resolvability}
\label{sec:exp-decision}

Proposition~\ref{prop:decision-preservation} shows that update error changes a
replacement decision only when it is large relative to the candidate margin.
For each held-out root, we compare the deployment winner with its competitors
using a winner-relative correction-to-margin ratio; values above one indicate
disjoint proxy and deployment optimal sets, with one as the tie boundary.
Appendix~\ref{app:pivot-ext} gives the exact definition.

Across \RevisionDecisionRoots{} held-out roots, proxy and deployment optimal sets
are disjoint in \RevisionDecisionFlips{} cases. Figure~\ref{fig:decision-relevance}
shows that disagreement alone does not determine validation value: what matters
is whether the fidelity error is large enough to threaten the replacement
decision and whether available HF evidence can resolve the resulting margin.
At the primary long budget, the PIVOT query set intersects the ex-post
decision-critical pair in \RevisionCriticalQueryRate{}
\RevisionCriticalQueryRateCI{} of roots, while the mean model-assumed queried
observation standard deviation (SD) is \RevisionCriticalNoiseMargin{} \RevisionCriticalNoiseMarginCI{}
times the proxy top-2 margin. This regularized-model diagnostic does not
estimate HF observation noise directly; targeting alone need not resolve ordering.

\begin{figure}[!t]
  \centering
  \includegraphics[width=\linewidth]{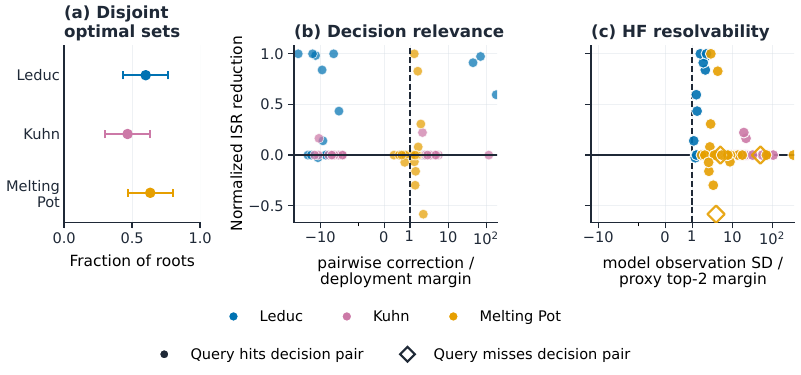}
  \caption{Decision relevance and HF resolvability. (a) Disjoint optimal sets.
  (b) Winner-relative correction-to-margin ratio versus
  normalized ISR reduction. (c) Model-assumed observation SD relative to the
  proxy top-2 margin versus normalized ISR reduction; filled markers hit the ex-post
  decision-critical pair.}
  \label{fig:decision-relevance}
\end{figure}

The held-out analysis isolates the mechanism behind useful validation:
fidelity error must threaten the replacement decision, and the available HF
evidence must be informative enough to resolve that uncertainty.

\subsection{PIVOT-KG under Limited High-Fidelity Budgets}
\label{sec:exp-pivot}

Table~\ref{tab:latest-main} reports the primary held-out fixed-budget comparison.
PIVOT-KG substantially lowers ISR in Leduc, the contrast is small in Kuhn, and
the primary Melting Pot comparison is unresolved. This heterogeneity is
consistent with the preceding mechanism analysis: deployment response alone
does not determine the value of targeted validation. Secondary Melting Pot
conditions are reported in Appendix~\ref{app:pivot-ext}.
For these cohorts, PIVOT-KG and Uniform share the calibrated posterior,
observation-update rule, and final selection rule, differing only in query
allocation; related component diagnostics are reported in
Appendix~\ref{app:pivot-ext}.

\begin{table}[!ht]
  \centering
  \footnotesize
  \caption{Primary held-out fixed-budget comparisons (30 paired roots per row).
  Positive Uniform $-$ PIVOT means lower ISR for PIVOT-KG; brackets give 95\%
  root-bootstrap CIs.}
  \label{tab:latest-main}
\begin{tabular*}{\linewidth}{@{\extracolsep{\fill}}lcrrl@{}}
\toprule
World / response & HF queries & Uniform ISR & PIVOT-KG ISR & \shortstack[l]{Uniform $-$ PIVOT\\{[95\% CI]}} \\
\midrule
Leduc, long & 2 & 0.046 & 0.009 & 0.036 [0.014, 0.064] \\
Kuhn, long & 2 & 0.058 & 0.056 & 0.002 [0.000, 0.006] \\
Melting Pot, long & 2 & 1.742 & 0.836 & 0.906 [-0.784, 3.401] \\
\bottomrule
\end{tabular*}
\end{table}

We also retain controlled batch-selector diagnostics, distinct from
Algorithm~\ref{alg:pivot-kg}. Their pooled candidate-only cumulative improvement-selection regret (CISR) contrast for Proxy-Only minus PIVOT-batch under the legacy forced-replacement setting is \ClosedLoopEffect{} \ClosedLoopEffectCI{}
over \ClosedLoopRounds{} rounds; Figure~\ref{fig:efficiency} compares the
single-decision batch selectors at matched query counts. A second Leduc
confirmation cohort yields a prespecified PIVOT-KG versus expected-Uniform
selected-gain contrast of +0.02989 using a repeated-query, sample-size variant.
Its component comparisons, the legacy closed-loop target, stopping behavior,
and numerical diagnostics are specified in Appendix~\ref{app:pivot-ext}.

\begin{figure}[!t]
  \centering
  \includegraphics[width=\linewidth]{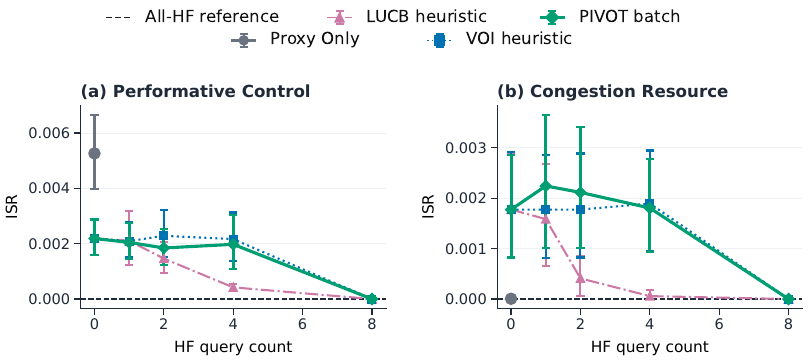}
  \caption{Legacy batch-selector diagnostics at matched HF query counts in
  (a) Performative Control and (b) Congestion Resource. Lower single-decision
  ISR is better; the all-HF reference uses only fully queried rows.
  Intervals bootstrap seeds.}
  \label{fig:efficiency}
\end{figure}

Together, these results support the central prediction of our analysis:
decision-aware validation is most useful when the remaining uncertainty is
both consequential for the replacement decision and resolvable by the
available HF evidence.

\subsection{Reactive Driving Stress Test}
\label{sec:exp-highway}

We finally test PIVOT-KG in a reactive driving simulation where HF evaluation
is scarce relative to the candidate set. The HighwayEnv redesign uses eight
scheduled candidates formed by crossing four non-idle actions with early and
late half-horizon windows, fits the correction model on 40 calibration roots,
and evaluates 120 disjoint test roots. At the primary budget $B=4$, PIVOT-KG
reduces ISR relative to the exact Uniform validation rule by
\HighwayRedesignBudgetFourContrast{} \HighwayRedesignBudgetFourCI{}.
The Uniform rule averages exactly over all feasible queried subsets, and
each PIVOT query corresponds to a paired actor rollout in the reactive simulator.

The HighwayEnv result shows the same decision-aware validation principle in
a reactive driving candidate-allocation problem. Earlier HighwayEnv
cohorts were unresolved at their primary budgets; these results, the secondary
$B=2$ redesign result, and a secondary
MetaDrive~\citep{li2022metadrive} stress test are reported in
Appendix~\ref{app:physical-stress}.

\section{Conclusion}

This paper reframes reliable self-improvement as an operator-relative,
update-level evaluation problem. Improvement Fidelity asks whether a proposed
improvement under a proxy remains an improvement after deployment in the world
that the update
induces. Our analysis shows why strong policy-level evaluation is not
sufficient: operator shift and deployment response can create update-level
errors, while candidate margins determine whether those errors change the
replacement decision. PIVOT-KG turns this structure into a decision-aware
validation rule that allocates scarce high-fidelity evidence toward
uncertainty that can affect the selected update. Across controlled,
multi-agent, and reactive driving simulations, our experiments support this
view and show when targeted validation can reduce selection regret.

The current study has three main limitations. First, the direct, actor, and
strategic response models capture several important forms of adaptation but do
not cover all deployment dynamics. Second, PIVOT-KG relies on a calibrated
correction model and sufficiently informative HF observations; when evidence
is too noisy or budgets are too small, decision-critical uncertainty can
remain unresolved. Reported held-out intervals condition on the fitted calibration
model and do not propagate calibration-set sampling uncertainty. Third, our
driving-simulator evidence has its clearest allocation result in HighwayEnv
and a more mixed MetaDrive stress test.

These results suggest several directions for future work, including richer
response and correction models, repeated self-improvement under co-adaptation,
and intervention-based HF evaluation in more realistic deployment systems. More
broadly, as agents increasingly modify their own policies, the central
evaluation question is not only whether a candidate scores better under a
verifier, but whether replacing the incumbent with that candidate remains an
improvement in the world the replacement creates. When better can become worse
after deployment, reliable self-improvement requires fidelity of improvements,
not merely accuracy of policy evaluations---a principle that provides both a
theoretical criterion for update reliability and a practical basis for allocating
scarce deployment evidence when it can affect the replacement decision.

\paragraph{AI assistance.}
The authors developed the research and initial manuscript draft. Generative AI
tools assisted limited code implementation, scientific consistency checks,
and final language and presentation polishing. The authors take full
responsibility for the research, experimental results, and manuscript.

\label{refs:start}

\clearpage
\appendix
\raggedbottom

\section{Response Mechanism Analyses}
\label{app:fidelity-response}

The operator contrast pools Performative Control, Congestion Resource, and
frozen Multi-Agent Particle Environment 2 (MPE2)~\citep{lowe2017maddpg}
with weights $0.4/0.4/0.2$; Figure~\ref{fig:operator-shift} shows the
first two. Its interval resamples 450 condition/seed cells sharing 30 seeds;
figure bands resample six condition means. Congestion strengths are duplicated;
global rank is a fixed zero-shift evolutionary-policy reference. These are
descriptive shift diagnostics, not measured $\chi^2$ divergence.
Figure~\ref{fig:layers}'s proxy disables response, unlike the fixed-incumbent-world
$\Delta_{\mathrm{direct}}$. Strategic reversal averages the three adaptive families.

\begin{figure}[!htbp]
  \centering
  \includegraphics[width=\linewidth]{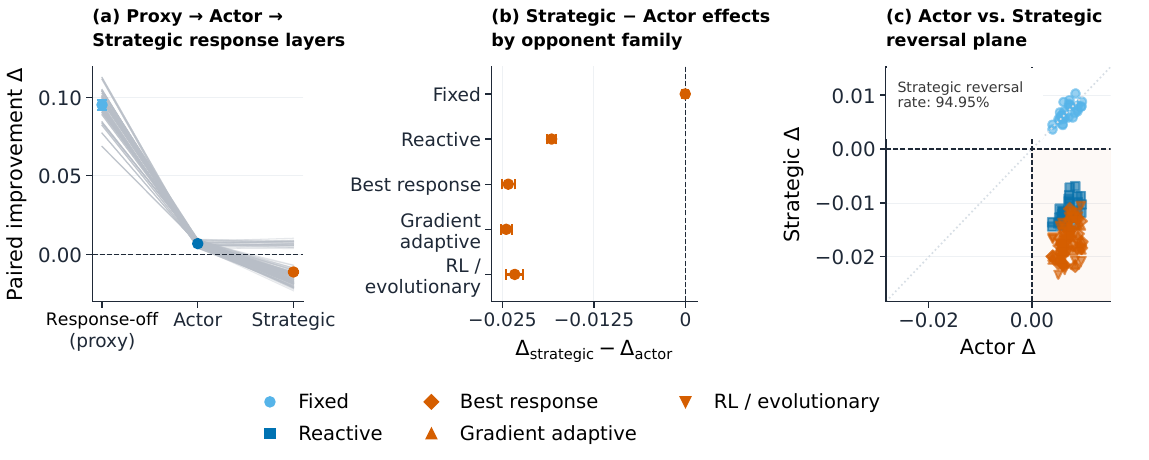}
  \caption{Response mechanisms. (a) Response-disabled proxy, actor, and
  strategic evaluations. (b) Strategic-minus-actor effects by opponent family. (c) Actor
  versus strategic improvement. Intervals use 30 matched seeds.}
  \label{fig:layers}
\end{figure}

A separate Leduc confirmation cohort tests the same mechanism along the
response horizon. Among proxy-positive candidates, the mean root-level
improvement-reversal rate rises from 2.5\% under one-step response to 51.7\%
under eight-step adaptation (Figure~\ref{fig:leduc-response-horizon}). This
supports the main-text conclusion that deployment response can amplify
update-level failure rather than merely shift policy values uniformly.

\begin{figure}[!htbp]
  \centering
  \includegraphics[width=0.70\linewidth]{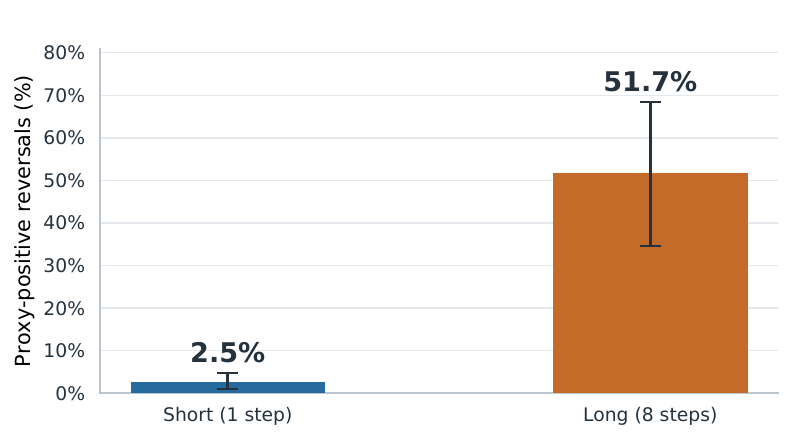}
  \caption{Leduc response-horizon confirmation. Reversal rates are computed
  among proxy-positive updates within each root, where a reversal means
  negative deployment gain. Bars show means across 30 roots under one-step
  and eight-step adaptation, with 95\% root-bootstrap CIs.}
  \label{fig:leduc-response-horizon}
\end{figure}

\FloatBarrier
\section{Extended PIVOT-KG Evaluation}
\label{app:pivot-ext}

In the calibrated poker, Melting Pot, and driving evaluations, candidate
coordinates follow the same fixed update slots across calibration and test roots.

\paragraph{Decision diagnostics.}
Let $T=\arg\max_{C^+}\Delta_*$ and $b\in\arg\max_{j\in T}\Delta_{V,j}$.
The ratio is $\max_{k\notin T}(G_b-G_k)/(\Delta_{*,b}-\Delta_{*,k})$,
with $G=\Delta_*-\Delta_V$ and tie tolerance $10^{-12}$.
The 51 disagreements are disjoint optimal sets; 28 roots have tied optima.
Above one indicates disjoint optimal sets; one is the tie boundary.
The query-hit pair uses proxy/deployment top candidates within $C$ (stable index ties);
top-2 margins exclude no-update. Normalization divides Uniform minus PIVOT ISR
by the deployment range over $C^+$. The noise ratio uses mean queried $\sqrt{R_j}$,
a margin floor of $10^{-12}$, and poker's model-variance floor $0.25$;
it is model-assumed, not directly measured observation noise.

At the primary long budget, the decision-critical query-hit rates are
\RevisionLeducCriticalQueryRate{} in Leduc,
\RevisionKuhnCriticalQueryRate{} in Kuhn, and
\RevisionMeltingCriticalQueryRate{} in Melting Pot. The descriptive normalized
gain is \RevisionCriticalQueryHitGain{} \RevisionCriticalQueryHitGainCI{}
among the 87 hit roots and
\RevisionCriticalQueryMissGain{} \RevisionCriticalQueryMissGainCI{} among the
three misses. These diagnostics separate successful targeting from successful
resolution.

\paragraph{Secondary and confirmation analyses.}
The first cohorts use 12 calibration/30 test roots and one frozen Uniform
draw. Poker's nine queryable entries include an incumbent copy; Melting Pot
uses eight specialist mixtures on one repeated stag-hunt substrate. Full
candidate/cost protocols accompany the supplement. Designated allocation
interactions are $0.0365\,[0.0145,0.0637]$ (Leduc), $0.0020\,[0,0.0058]$
(Kuhn), and $1.4815\,[-0.3930,4.0956]$ (Melting Pot); only Leduc meets the
joint primary/interaction criterion.

Table~\ref{tab:secondary-sealed} retains the secondary comparisons. The second
Leduc cohort was designed after the first-cohort results, with fresh confirmation
roots. This cohort is a mechanism confirmation study rather than a direct replication of the fixed-budget benchmark. Its PIVOT-KG variant queries $(j,n)$ pairs with $n\in\{512,2048,4096\}$,
permits repeated candidates, and charges $2n$ hands; it extends the once-per-candidate
Algorithm~\ref{alg:pivot-kg}. At 16,384 hands its prespecified contrast against
exact-subset expected Uniform is +0.02989 under long adaptation.
Figure~\ref{fig:leduc-components} retains the component comparisons; its final
row is $(\mathrm{PIVOT}-\mathrm{Uniform})_{\rm long}-(\mathrm{PIVOT}-\mathrm{Uniform})_{\rm short}$,
not a direct long-minus-short policy-gain contrast.

\begin{table}[!htbp]
  \centering
  \footnotesize
  \caption{Secondary held-out comparisons. Positive Uniform $-$ PIVOT means
  lower ISR for PIVOT-KG; brackets give 95\% root-bootstrap CIs.}
  \label{tab:secondary-sealed}
\begin{tabular*}{\linewidth}{@{\extracolsep{\fill}}lcrrl@{}}
\toprule
World / response & HF queries & Uniform ISR & PIVOT-KG ISR & \shortstack[l]{Uniform $-$ PIVOT\\{[95\% CI]}} \\
\midrule
Melting Pot, long & 1 & 5.283 & 1.859 & 3.423 [0.390, 6.944] \\
Melting Pot, short & 5 & 1.035 & 1.610 & -0.575 [-1.042, -0.180] \\
\bottomrule
\end{tabular*}
\end{table}

\begin{figure}[H]
  \centering
  \includegraphics[width=0.90\linewidth]{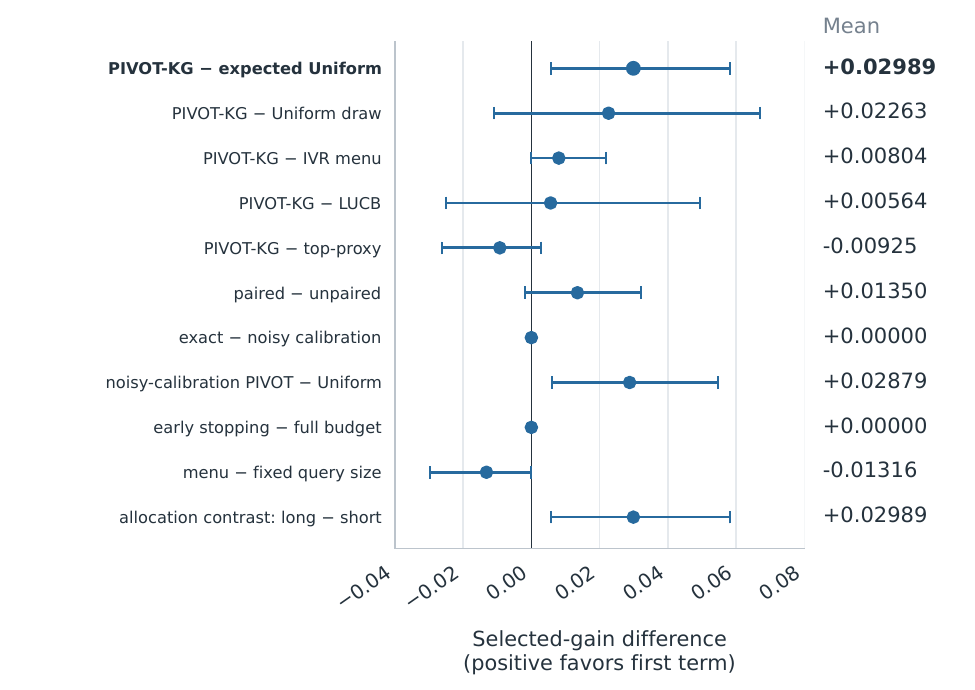}
  \caption{Leduc confirmation-cohort component contrasts under long response
  (30 roots; primary budget 16,384 HF hands). Positive values favor the first
  term. The expected-Uniform comparison is prespecified primary; all others
  are secondary with unadjusted 95\% CIs.}
  \label{fig:leduc-components}
\end{figure}

The secondary stopping rule reduces mean HF use from 16,384 to 12,083.2 hands,
a 26.25\% reduction, while producing the same selected gain as the full-budget
rule on all 30 observed roots. The paired difference in HF use is $-4300.8$
$[-6007.5,-2662.4]$ hands. Thus the stopping variant can save validation
queries in this cohort without changing the observed replacement decision.

\paragraph{Closed-loop and numerical diagnostics.}
Figures~\ref{fig:efficiency}, \ref{fig:closed-loop}, and~\ref{fig:robustness}
use legacy batch selection, not Algorithm~\ref{alg:pivot-kg}: no within-batch
reacquisition; observed values replace queried estimates; unqueried estimates
can omit later observations. The closed loop forces replacement and reports
$\mathrm{CISR}_C=\sum_t[\max_{j\in C_t}\Delta_{*,j}-\Delta_{*,\hat j_t}]$
without no-update; negative improvement with zero regret is then possible.
Figure~\ref{fig:efficiency} counts queries and uses VOI/LUCB heuristics.
Figure~\ref{fig:robustness} compares Monte Carlo draws on 24 Congestion test
groups ($K=4$): all costs scale equally, query count stays two, and seeds vary.
With common draws the ranking is invariant, so this does not identify cost misspecification.

\begin{figure}[!htb]
  \centering
  \includegraphics[width=\linewidth]{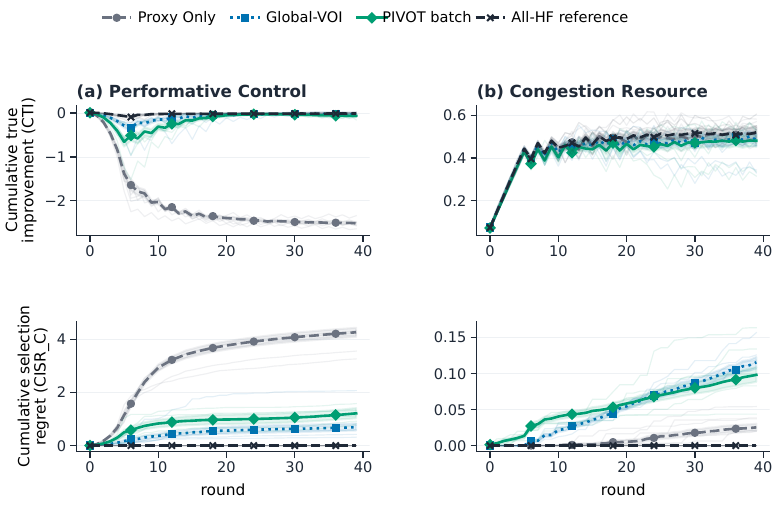}
  \caption{Legacy forced-replacement trajectories. Top: cumulative true
  improvement; bottom: candidate-only $\mathrm{CISR}_C$, excluding no-update.
  Thin curves are seeds; markers summarize batch decision rules.}
  \label{fig:closed-loop}
\end{figure}

\begin{figure}[!htb]
  \centering
  \includegraphics[width=\linewidth]{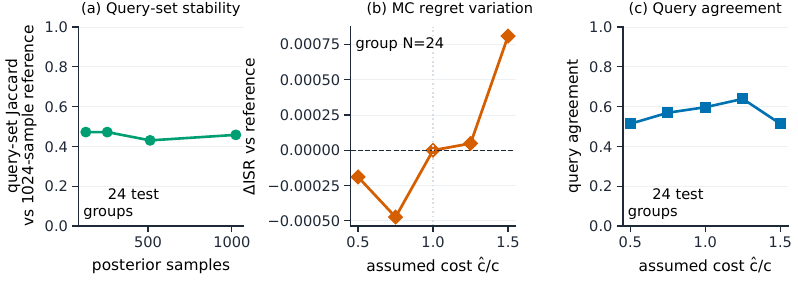}
  \caption{Legacy batch-selector Monte Carlo diagnostics on 24 test groups.
  Panels show query agreement across sample counts and regret/query variation
  under common cost scaling with different random draws; this is not an
  identified cost-misspecification effect.}
  \label{fig:robustness}
\end{figure}

Additional response-mixture settings yield positive, null, and negative
allocation effects, consistent with the main-text conclusion that deployment
response alone does not determine validation value. Full method-by-method and
response-mixture diagnostics are provided in the supplement.

\section{Reactive Driving Stress Tests}
\label{app:physical-stress}
\label{app:highway-replication}

\paragraph{HighwayEnv.}
The original and fresh-seed replication cohorts each use 20 calibration and 60
test roots, with no overlap between cohorts. Both use the same five-action
panel; the primary budget is $B=2$ and $B=4$ is secondary.
Intervals use paired root bootstrap and condition on the fitted calibration
model. A technical smoke run exposed two replication roots; they are
retained without subsequent tuning or exclusion. Uniform retains proxy
estimates for unqueried candidates, whereas PIVOT-KG uses its calibrated
posterior, so the comparison evaluates complete selection rules rather than
acquisition alone.

The eight-candidate redesign freezes eight scheduled candidates, fits a joint
correction model on 40 calibration roots, and evaluates 120 disjoint test
roots. Its primary budget is $B=4$ and $B=2$ is secondary. Uniform
is the exact average over all $\binom{8}{B}$ queried subsets (70 at $B=4$ and
28 at $B=2$), while PIVOT-KG applies sequential Gaussian updates. Each query is
a paired actor rollout in the reactive simulator. Two test roots were exposed
during technical smoke and are retained without retuning or exclusion. Detailed execution and
audit records are provided in the supplement.

\begin{table}[!htbp]
  \centering
  \footnotesize
  \caption{HighwayEnv cohorts. Positive Uniform $-$ PIVOT means lower ISR for
  PIVOT-KG; brackets give paired 95\% bootstrap CIs. Bold budgets mark the primary comparison within each cohort.}
  \label{tab:highway-budget}
\begin{tabular*}{\linewidth}{@{\extracolsep{\fill}}lcrrrl@{}}
\toprule
Cohort & $B$ & Roots & Uniform ISR & PIVOT-KG ISR & \shortstack[l]{Uniform $-$ PIVOT\\{[95\% CI]}} \\
\midrule
Original & 1 & 60 & 0.0492 & 0.0718 & $-0.0226$ $[-0.0439,-0.0008]$ \\
Original & \textbf{2} & 60 & 0.0422 & 0.0374 & $0.0049$ $[-0.0088,0.0199]$ \\
Original & 4 & 60 & 0.0144 & 0.0029 & $0.0115$ $[0.0008,0.0237]$ \\
\addlinespace[3pt]
Replication & \textbf{2} & 60 & 0.0254 & 0.0261 & $-0.0006$ $[-0.0143,0.0145]$ \\
Replication & 4 & 60 & 0.0037 & 0.0004 & $0.0034$ $[-0.0007,0.0099]$ \\
\addlinespace[3pt]
Redesign (8) & 2 & 120 & 0.0613 & 0.0157 & $0.0456$ $[0.0343,0.0587]$ \\
Redesign (8) & \textbf{4} & 120 & 0.0435 & 0.0055 & $0.0380$ $[0.0300,0.0474]$ \\
\bottomrule
\end{tabular*}
\end{table}

\begin{figure}[!htb]
  \centering
  \includegraphics[width=0.90\linewidth]{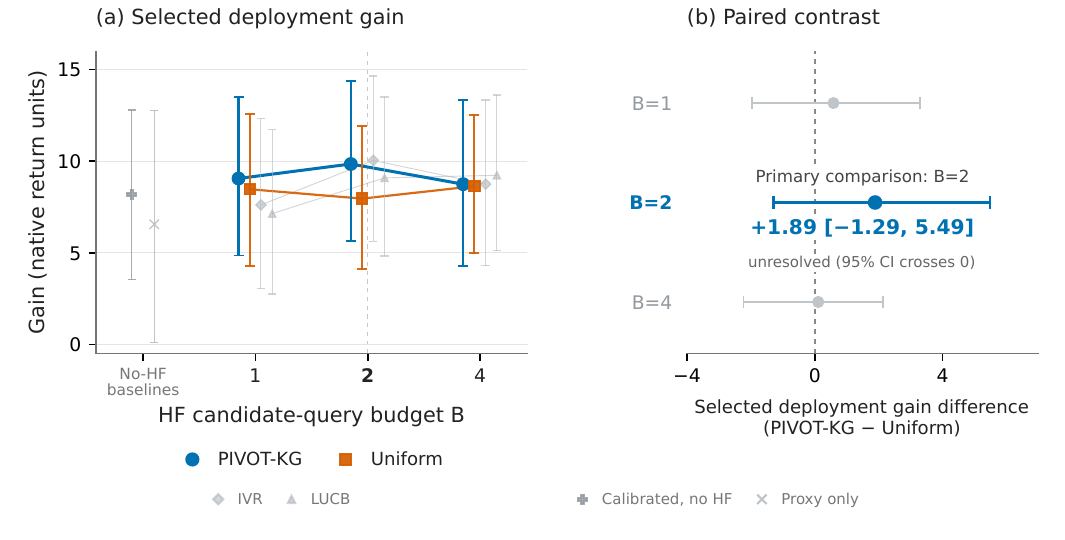}
  \caption{MetaDrive stress test as a boundary case. The prespecified $B=2$
  comparison remains unresolved: $+1.89\,[-1.29,5.49]$, with the 95\%
  root-bootstrap CI crossing zero. (a) Selected deployment gain (12 profiles,
  30 held-out roots; marginal 95\% root-bootstrap intervals).
  (b) Paired PIVOT-KG--Uniform selected deployment gain contrasts;
  $B=1,4$ are secondary.}
  \label{fig:metadrive-stress}
\end{figure}

\paragraph{MetaDrive.}
We additionally evaluate a first MetaDrive~\citep{li2022metadrive} cohort under
a long-response condition with 12 profiles and 30 held-out roots. Figure~\ref{fig:metadrive-stress} reports selected deployment gain across
one, two, and four HF candidate-query budgets. At the prespecified two-query
budget, the paired PIVOT-KG--Uniform contrast is
$+1.89\,[-1.29,5.49]$ (30-root bootstrap, 10,000 draws), leaving the primary
comparison unresolved. The $B=1$ and $B=4$ comparisons are secondary with
unadjusted intervals. We retain MetaDrive as boundary evidence rather than a
stable positive result; roots share one fixed bottleneck geometry.
Integrated variance reduction (IVR) and LUCB use the matched posterior.
The calibrated no-HF and proxy-only baselines incur zero HF cost and appear
as the separate No-HF baselines. Uniform averages 100 allocation draws per root.
Positive-query costs include response search and paired evaluation; the
supplement retains all eight rules.

\paragraph{Evaluator boundary.}
Improvement Fidelity defines an evaluation target, not a preferred learner
architecture. In an out-of-distribution comparison, a transition-specific
learner does not outperform the global learner: the transition-minus-global
ISC contrast is \OODGain{} \OODGainCI{}. Full ISC and IDE comparisons are
reported in the supplement.
\end{document}